\documentclass[letterpaper]{article} 
\usepackage{mysty}  
\usepackage{times}  
\usepackage{helvet}  
\usepackage{courier}  
\usepackage[hyphens]{url}  
\usepackage{graphicx} 
\urlstyle{rm} 
\usepackage{natbib}  
\usepackage{caption} 
\frenchspacing  
\setlength{\pdfpagewidth}{8.5in} 
\setlength{\pdfpageheight}{11in} 
%
\usepackage{algorithm}
\usepackage{algorithmic}

\usepackage{amsmath}
\usepackage{amssymb}
\usepackage{amsthm}
\usepackage{mathtools}
\usepackage{amsfonts}
\usepackage{bm}
\mathtoolsset{showonlyrefs,showmanualtags}

\newtheorem{definition}{Definition}
\newtheorem{theorem}{Theorem}

\newcommand\rank[1]{\mathrm{\texttt{rank}}(#1)}
\newcommand\dist[1]{\mathrm{\texttt{dist}}(#1)}

\newcommand{\bmx}{\bm{x}}
\newcommand{\bmy}{\bm{y}}
\newcommand{\bmf}{\bm{f}}
\newcommand{\bmz}{\bm{z}}

%
\usepackage{newfloat}
\usepackage{listings}
\DeclareCaptionStyle{ruled}{labelfont=normalfont,labelsep=colon,strut=off} 
\lstset{%
	basicstyle={\footnotesize\ttfamily},
	numbers=left,numberstyle=\footnotesize,xleftmargin=2em,
	aboveskip=0pt,belowskip=0pt,%
	showstringspaces=false,tabsize=2,breaklines=true}
\floatstyle{ruled}
\newfloat{listing}{tb}{lst}{}
\floatname{listing}{Listing}
%
\pdfinfo{
/TemplateVersion (2024.1)
}

\setcounter{secnumdepth}{0} 

%


\title{Towards Running Time Analysis of \\Interactive Multi-objective Evolutionary Algorithms}
\author{
    Tianhao Lu,
    Chao Bian and Chao Qian\thanks{Chao Qian is the corresponding author.}
}
\affiliations{
    State Key Laboratory for Novel Software Technology, Nanjing University, Nanjing 210023, China\\


    \{luth, bianc, qianc\}@lamda.nju.edu.cn
%
}

\usepackage{bibentry}

\begin{document}

\maketitle

\begin{abstract}
    Evolutionary algorithms (EAs) are widely used for multi-objective optimization due to their population-based nature. Traditional multi-objective EAs (MOEAs) generate a large set of solutions to approximate the Pareto front, leaving a decision maker (DM) with the task of selecting a preferred solution. However, this process can be inefficient and time-consuming, especially when there are many objectives or the subjective preferences of DM is known. To address this issue, interactive MOEAs (iMOEAs) combine decision making into the optimization process, i.e., update the population with the help of the DM. In contrast to their wide applications, there has existed only two pieces of theoretical works on iMOEAs, which only considered interactive variants of the two simple single-objective algorithms, RLS and (1+1)-EA. This paper provides the first running time analysis (the essential theoretical aspect of EAs) for practical iMOEAs. Specifically, we prove that the expected running time of the well-developed interactive NSGA-II (called R-NSGA-II) for solving the OneMinMax and OneJumpZeroJump problems is $O(n \log n)$ and $O(n^k)$, respectively, which are all asymptotically faster than the traditional NSGA-II. Meanwhile, we present a variant of OneMinMax, and prove that R-NSGA-II can be exponentially slower than NSGA-II. These results provide theoretical justification for the effectiveness of iMOEAs while identifying situations where they may fail. Experiments are also conducted to validate the theoretical results.
\end{abstract}

\section{Introduction}
Multi-objective optimization (MOP)~\cite{mop} requires to optimize several objective functions simultaneously, which appears in various real-world applications. Since these functions usually conflict with each other, there does not exist a solution which can outperform any other solution in every objective function. For example, in neural architecture search~\cite{debneural}, an architecture with higher accuracy often has higher complexity as well. Therefore, the goal of MOP is to identify a set of solutions that represent diverse optimal trade-offs between objective functions, referred to as Pareto optimal solutions.

Evolutionary algorithms (EAs)~\cite{back1996evolutionary}  are a large class of randomized heuristic optimization algorithms, inspired by natural evolution. They keep a set of solutions (called population) and iteratively improve it by generating offspring solutions and selecting better ones. The population-based nature makes EAs highly appropriate for solving MOPs, which have become a popular tool in various real-world MOP situations~\cite{deb2002fast,zhang,beume2007sms}.

After the optimization of multi-objective EAs (MOEAs), a decision maker (DM) needs to select one solution from the generated set of non-dominated solutions based on her (or his) preference. However, the paradigm of finding a large set of solutions at first and then selecting an appropriate one may be inefficient and time-consuming under some circumstances, especially when the number of objectives is large or DM has some subjective preference information. Specifically, when the number of objectives increases, the objective space gets increasingly complex, making it hard to find solutions with good convergence and spread; when the subjective preference of DM is known, such information can actually give a directional bias for the search, while MOEAs  usually ignore it and simply search the whole solution space equally, resulting in the waste of computing resources in inessential regions.

Interactive MOEAs (iMOEAs)~\cite{tanino1993interactive} is thus proposed, which combines decision making into the optimization process. That is, in the population update procedure of each generation, the DM is used to select the preferred solution out of two incomparable solutions. The preference information provided by the DM can be a utility function~\cite{gong2011interactive, koksalan2010interactive, battiti2010brain}, which outputs a function value to indicate the goodness of a solution based on the solution's objective vector; a reference point~\cite{huber2015simulation, said2010r, thiele2009preference}, which represents the DM's favorite objective vector and prefers a solution with objective vector closer to it; and a surrogate model~\cite{li2019progressive}, which directly selects a preferred solution by using a machine learning model, e.g., neural network. 
Due to their fast convergence and the ability to reflect the DM's preference, iMOEAs have achieved many successful real-world applications, e.g., aircraft design~\cite{bandte2004broad, weigang2007interactive}, software defign~\cite{RAMIREZ201892}, and land use planning~\cite{janssen2008multiobjective}.

In contrast to wide applications, the theoretical foundation of MOEAs as well as iMOEAs is still underdeveloped, which is mainly due to the sophisticated behaviors of EAs and the hardness of MOPs. The running time analysis, the essential theoretical aspect, of MOEAs starts from analyzing the expected running time of simple MOEAs (e.g., SEMO (or GSEMO) which employs the one-bit (or bit-wise) mutation operator to generate an offspring solution in each iteration and keeps the non-dominated solutions generated-so-far in the population) for solving multi-objective synthetic and combinatorial optimization problems~\cite{giel2003expected,laumanns-nc04-knapsack,LaumannsTEC04,Neumann07,Horoba09,Giel10,Neumann10,doerr2013lower,Qian13,bian2018general,dinot2023runtime}.

Recently, researchers are shifting their focus to analyzing practical MOEAs.
Zheng, Liu and Doerr~\shortcite{zheng2021first} analyzed the expected running time of the popular NSGA-II~\cite{deb2002fast} solving the OneMinMax and LOTZ problems for the first time. Later, Zheng and Doerr~\shortcite{zheng2022better} considered a modified crowding distance method which recalculates the crowding distance each time the solution with the smallest crowding distance is deleted, and proved that it can lead to better approximation to the Pareto front. 
Bian and Qian~\shortcite{bian2022running} proved that replacing binary tournament selection with stochastic tournament selection in parent selection can significantly decrease the running time of NSGA-II. More works include the analysis of NSGA-II solving the many-objective problem mOneMinMax~\cite{zheng2023runtime} and the bi-objective minimum spanning tree problem~\cite{cerf2023first}, NSGA-II solving the LOTZ problem in the presence of noise~\cite{dang2023analysing}, the effectiveness of the crossover operator~\cite{dang2023proof,doerr2023runtime}, and the lower bound of NSGA-II solving the OneMinMax and OneJumpZeroJump problems~\cite{doerr2023understanding}.
Besides, the expected running time of other popular MOEAs has also been considered, e.g., MOEA/D~\cite{huang2019running,huang2020runtime,huang2021runtime}, SMS-EMOA~\cite{bian2023sms}, and NSGA-III~\cite{doerr2022mathematical}.

Though much effort has been devoted to analyzing MOEAs, the theoretical analysis of iMOEAs is quite rare. To the best of our knowledge, there have been only two pieces of work. Brockhoff et al.~\shortcite{brockhoff2012runtime} analyzed the expected running time of iRLS and (1+1)-iEA solving two bi-objective problems LOTZ and COCZ, where iRLS and (1+1)-iEA maintain exactly one solution in the population and iteratively improve the solution by generating one offspring solution and keeping the better one. Specifically, the parent solution and offspring solution are first compared according to the Pareto dominance relationship, i.e., the dominating one will be picked; if they are incomparable, the DM will pick the preferred one, i.e., the one with a larger utility function value will be picked. They proved that the performance of (1+1)-iEA and iRLS may dramatically worsen if the utility function is non-linear instead of linear. Neumann and Nguyen~\shortcite{neumann2014impact} analyzed the expected running time of iRLS and (1+1)-iEA solving the multi-objective knapsack problem, and proved that using a linear utility function as DM can make iRLS and (1+1)-iEA perform as same as their single-objective counterparts, i.e., RLS and (1+1)-EA~\cite{droste2002analysis}, solving the weighted objective sum of knapsack. Though the above two works analyzed (1+1)-iEA and iRLS solving multi-objective problems, the two algorithms are actually the variants of the simple single-objective algorithms RLS and (1+1)-EA, which cannot reflect the common structure of iMOEAs.

In this paper, we make the first attempt towards analyzing the expected running time of practical iMOEAs. We consider R-NSGA-II~\cite{deb2006reference}, which is an interactive variant of the well-known NSGA-II~\cite{deb2002fast}, and prefers a solution with a smaller distance to a reference point based on the DM’s preference, instead of selecting from incomparable solutions based on their crowding distance. We prove that the expected running time of R-NSGA-II for solving OneMinMax and OneJumpZeroJump is $O(Nn \log n)$ and $O(n^k)$, respectively, where $N$ is the population size, $n$ is the problem size, and $k$ is the parameter of OneJumpZeroJump. Note that OneMinMax and OneJumpZeroJump are two commonly used bi-objective optimization problems in theoretical analyses of MOEAs~\cite{zheng2021first,zheng2023runtime,doerr2023understanding,bian2023sms}. Here solving a problem means finding a solution corresponding to the reference point, which is what the DM really wants. We also prove that the expected running time of NSGA-II solving these two problems is $O(Nn\log n)$ and $O(Nn^k)$, respectively, where the population size $N$ is required to be $\Omega(n)$. Thus, R-NSGA-II using a constant population size can be asymptotically faster than NSGA-II. The main reason is that R-NSGA-II can always keep the solution closest to the reference point in the population, while NSGA-II requires a large population size to ensure this. These results provide theoretical justification for the effectiveness of iMOEAs, and are also verified by experiments.

Next, we present a variant of OneMinMax, and prove that the expected running time of NSGA-II with $N \geq 4(n+1)$ is $O(Nn\log n)$, while R-NSGA-II using any polynomial population size requires $\Omega(n^{n/4})$. Thus, R-NSGA-II can be exponentially slower than NSGA-II. This is because the search direction of R-NSGA-II is guided towards the reference point in the objective space, but the solutions are actually getting far away from the solution corresponding to the reference point in the decision space. This finding may generally hold in complex real-world problems with rugged fitness landscape, e.g., we have verified it by the experiments on the combinatorial problem multi-objective NK-landscape~\cite{aguirre2007working}. Thus, our theoretical results suggest that when designing practical iMOEAs, relying solely on DM may be risky, while maintaining some diversity in the population may be helpful.

\section{Preliminaries}
In this section, we first introduce multi-objective optimization, and then present the procedures of NSGA-II and its interactive variant, R-NSGA-II.

\subsection{Multi-objective Optimization}

Multi-objective optimization entails the simultaneous optimization of two or more objective functions, as illustrated in Definition 1. In this paper, we focus on maximization, and minimization can be defined similarly. The objectives often exhibit conflicting interests, resulting in the absence of a complete order within the solution space $S$. Consequently, comparing solutions often relies on the domination relationship, as presented in Definition 2. A solution is deemed Pareto optimal if no other solution in $S$ dominates it. The collection of objective vectors of all the Pareto optimal solutions is called the Pareto front. The primary aim of multi-objective optimization is to identify the Pareto front, i.e., find at least one corresponding solution for each objective vector within this front. In this paper, we will study pseudo-Boolean problems, i.e., the solution space $S=\{0,1\}^n$, where $n$ denotes the problem size.

\begin{definition}[Multi-objective Optimization]\label{def_MO}
Given a feasible solution space $S$ and objective functions $f_1, f_2, ..., f_m$, multi-objective optimization can be formulated as
\begin{align}
		\max\nolimits_{\bmx \in
			S}\; \big(f_1(\bmx),f_2(\bmx),...,f_m(\bmx)\big).
\end{align}
\end{definition}

\begin{definition}[Dominance]\label{def_DO}
    Let $\bmf = (f_1, f_2,..., f_m):S\rightarrow \mathbb{R}^m$ be the objective vector. For two solutions $\bmx$ and $\bmy\in S:$
    \begin{itemize}
        \item $\bmx$ \emph{weakly dominates }$\bmy$ (denoted as $\bmx\succeq \bmy$) if $\forall 1\leq i\leq m,$ $f_i(\bmx)\geq f_i(\bmy)$;
        \item $\bmx$ \emph{dominates } $\bmy$ (denoted as $\bmx\succ \bmy$) if $\bmx \succeq \bmy$ and $f_i(\bmx)>f_i(\bmy)$ for some $i$;
        \item $\bmx$ and $\bmy$ are \emph{incomparable} if neither $\bmx\succeq \bmy$ nor $\bmy \succeq \bmx$.
    \end{itemize}
\end{definition}

\subsection{NSGA-II}

The NSGA-II algorithm~\cite{deb2002fast} as presented in Algorithm~\ref{alg:nsgaii} is a widely-used MOEA known for its incorporation of two substantial features: non-dominated sorting (presented in Algorithm~\ref{alg:fastsort}) and crowding distance (presented in Algorithm~\ref{alg:crowdist}). NSGA-II commences by initializing a population of $N$ random solutions (line~1). In each generation, it applies bit-wise mutation on each solution in the current population $P$ to generate $N$ offspring solutions (lines~4--7). The bit-wise mutation operator flips each bit of an individual's binary representation independently with a predefined probability, which is $1/n$ in this paper.
 
After generating $N$ offspring solutions, the next generation's population is determined by selecting the best $N$ solutions from the current population $P$ and the offspring population $Q$ (lines~8--14). These solutions are partitioned into non-dominated sets, denoted as $F_1, F_2$, and so on. $F_1$ contains all the non-dominated solutions in the combined set $P\cup Q$, while $F_i$ ($i\geq 2$) contains all the non-dominated solutions in the set $(P\cup Q)\setminus \cup_{j=1}^{i-1} F_j$. The solutions from sets $F_1, F_2$, and so forth, are sequentially added to the next population (lines~10--12) until the population size reaches or exceeds $N$. When the inclusion of a critical set $F_i$ threatens to exceed the population size limit $N$, the crowding distance for each solution in that set is computed. Intuitively, the crowding distance of a solution represents the average distance to its neighbouring solutions in the objective space, 

\begin{algorithm}[!t]
 	\caption{NSGA-II Algorithm~\cite{deb2002fast}}
 	\label{alg:nsgaii}
 	\begin{flushleft}
 		\textbf{Input}: objective functions $f_1,f_2\ldots,f_m$, population size $N$\\
 		\textbf{Output}: $N$ solutions from $\{0,1\}^n$\\
 		\textbf{Process}:
 	\end{flushleft}
 	\begin{algorithmic}[1]
 		\STATE $P \leftarrow N$ solutions randomly selected from $\{0, 1\}^{n}$;
 		\WHILE{criterion is not met}
 		\STATE $Q=\emptyset$;  
 		\FOR{each solutions $\bm x$ in $P$}
 		\STATE apply bit-wise mutation on $\bm x$ to generate $\bm x'$;
 		\STATE add $\bm x'$ into $Q$
 		\ENDFOR
 		\STATE apply Algorithm~\ref{alg:fastsort} to partition $P\cup Q$ into non-dominated sets $F_1,F_2,\ldots$;
 		\STATE let $P=\emptyset$, $i=1$;
 		\WHILE{$|P\cup F_i|<N$}
 		\STATE $P=P\cup F_i$, $i=i+1$ 
 		\ENDWHILE
 		\STATE  apply Algorithm~\ref{alg:crowdist} to assign each solution in $F_i$ with a crowding distance; 
 		\STATE sort the solutions in $F_i$ by crowding distance in descending order, and add the first $N-|P |$ solutions into $P$
 		\ENDWHILE
 		\RETURN $P$
 	\end{algorithmic}
 \end{algorithm}
\vspace{-2cm}
\begin{algorithm}[H]
	\caption{Fast Non-dominated Sorting}
	\label{alg:fastsort}
	\begin{flushleft}
		\textbf{Input}:  a population $P$\\
		\textbf{Output}: non-dominated sets $F_1,F_2,\ldots$\\
		\textbf{Process}:
	\end{flushleft}
	\begin{algorithmic}[1]
		\STATE $F_1=\emptyset$;
		\FOR{ each $\bmx \in P$}
		\STATE $S_{\bmx}=\emptyset ; n_{\bmx}=0$;
		\FOR{ each $y \in P$}
		\IF{$\bmx\succ \bmy$}
		\STATE $S_{\bmx}=S_{\bmx} \cup\{\bmy\}$
		\ELSIF {$\bmx \prec \bmy$}
		\STATE $n_{\bmx}=n_{\bmx}+1$
		\ENDIF
		\ENDFOR
		\IF{$n_{\bmx}=0$}
		\STATE $\rank{\bmx}=1 ; F_{1}=F_{1}\cup \{\bmx\}$
		\ENDIF
		\ENDFOR
		\STATE $i=1$;
		\WHILE{ $F_{i} \neq \emptyset$}
		\STATE $Q=\emptyset$;
		\FOR{ each $\bmx \in F_{i}$ }
		\FOR{ each $\bmy \in S_{\bmx}$ }
		\STATE $n_{\bmy}=n_{\bmy}-1$;
		\IF{ $n_{\bmy}=0$}
		\STATE $\rank{\bmy}=i+1$; $Q=Q \cup\{\bmy\}$
		\ENDIF
		\ENDFOR
		\ENDFOR
		\STATE $i=i+1$; $F_i=Q$
		\ENDWHILE
	\end{algorithmic}
\end{algorithm}

\begin{algorithm}[t]
	\caption{Crowding Distance Assignment}
	\label{alg:crowdist}
	\begin{flushleft}
		\textbf{Input}: $Q=\{\bmx^1,\bmx^2,\ldots,\bmx^l\}$ with the same rank\\
		\textbf{Output}: the crowding distance $\dist{\cdot}$ for each solution in $Q$\\
		\textbf{Process}:
	\end{flushleft}
	\begin{algorithmic}[1]
		\STATE let $\dist{\bmx^j}=0$ for any $j\in\{1,2,\ldots,l\}$;
		\FOR{$i=1$ to $m$}
		\STATE sort the solutions in $Q$ w.r.t. $f_i$ in ascending order;
		\STATE $\dist{Q[1]}=\dist{Q[l]}=\infty$;
		\FOR{$j=2$ to $l-1$}
		\STATE $\dist{Q[j]}\!=\!\dist{Q[j]}+\frac{f_i(Q[j+1])-f_i(Q[j-1])}{f_i(Q[l])-f_i(Q[1])}$
		\ENDFOR
		\ENDFOR
	\end{algorithmic}
\end{algorithm}

\noindent and a larger crowding distance implies a larger diversity of the solution.
The solutions in the critical set $F_i$ with larger crowding distances are selectively included to fill the remaining population slots (line~14).

\subsection{R-NSGA-II}

NSGA-II, as the most widely applied MOEA, has been modified many times for designing its interactive variants, most of which replace the crowding distance with the DM's preference information, e.g., characterized by a reference point~\cite{deb2006reference}, reference direction combining preference thresholds~\cite{deb2007light}, or Gaussian functions~\cite{narukawa2014evolutionary}. Because most interactive variants of NSGA-II adopt a reference point as the DM, we consider R-NSGA-II~\cite{deb2006reference}, i.e., Reference-point-based NSGA-II, in this paper, which is widely applied in practice.

R-NSGA-II is similar to NSGA-II, except that the solution selection from the critical set $F_i$ depends on the distance to the reference point instead of the traditional crowding distance. A solution having a smaller distance to the reference point is favored. Specifically, ``a crowding distance" in line~13 of Algorithm~\ref{alg:nsgaii} changes to ``the distance to the reference point", and ``crowding distance in descending order" in line~14 changes to ``the distance to the reference point in ascending order". Note that the distance to the reference point denotes the Euclidean distance in the objective space.  

The selection of reference point depends on the specific problem to be solved, and reflects the DM’s preference. For ease of theoretical analysis, we use an objective vector in the Pareto front as a reference point, following previous theoretical analysis of iMOEAs~\cite{brockhoff2012runtime,neumann2014impact}. As the reference point reflects the DM’s preference, the stopping criterion of an algorithm is set as finding the reference point, i.e., finding at least one solution whose objective vector equals to the reference point. Such a natural setting can avoid any reliance on subjective human factors and ensure objectivity for theoretical analysis.

\section{R-NSGA-II Can Do better Than NSGA-II}
In this section, we analyze the expected running time of NSGA-II and R-NSGA-II for solving two bi-objective problems, i.e., OneMinMax and OneJumpZeroJump. We prove that for these two problems, R-NSGA-II can find the reference point asymptotically faster than NSGA-II.

\subsection{Analysis on OneMinMax}
The OneMinMax problem as presented in Definition 1 aims to maximize the number of 0-bits and the number of 1-bits of a binary bit string simultaneously. The Pareto front of OneMinMax is $F^* = \{(k, n-k)\mid k\in[0..n]\}$, and any solution $\bmx\in\{0,1\}^n$ is Pareto optimal, corresponding to the objective vector $(n-|\bmx|_1,|\bmx|_1)$ in the Pareto front, where $|\cdot|_1$ denotes the number of 1-bits of a solution. The size of the Pareto front $|F^*|=n+1$. We let the reference point $\bmf(\bmz)$ be any vector $(k,n-k),k\in[0..n]$, since the selection of reference point will not change our results.

\begin{definition}[OneMinMax~\cite{Giel10}]
    The OneMinMax problem of size $n$ is to find $n$-bits binary strings which maximize
    \begin{align}
        \bmf(\bmx)=\Big(n-\sum\nolimits_{i=1}^{n}x_i, \sum\nolimits_{i=1}^{n}x_i\Big)
    \end{align}
    where $x_i$ denotes the $i$-th bit of $\bmx\in \{0,1\}^{n}$.
\end{definition}

We prove in Theorems~\ref{theo-omm-nsgaii} and~\ref{theo-omm-rnsgaii} that NSGA-II can find the reference point in $O(Nn\log n)$ expected number of fitness evaluations, where the population size $N\ge 4(n+1)$; and R-NSGA-II can find the reference point in $O(Nn\log n)$ expected time for any population size $N$. Thus, our results show that if using a constant population size, R-NSGA-II can be at least $n$ times faster than NSGA-II with $N\ge 4(n+1)$. The main proof idea of Theorem~\ref{theo-omm-nsgaii} is that the population of NSGA-II can continuously expand in the Pareto front, until the reference point is found.

\begin{theorem}\label{theo-omm-nsgaii}
For NSGA-II solving the OneMinMax problem, if using a population size $N$ of at least $4(n+1)$, then the expected number of fitness evaluations for finding the reference point $(k, n-k)$, $k\in[0..n]$, is $O(Nn\log n)$.
\end{theorem}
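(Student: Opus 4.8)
The plan is to first pin down what one generation of NSGA-II does on OneMinMax, then prove a monotone \emph{coverage} lemma via a direct crowding-distance computation, and finally bound the number of generations needed until the reference point is covered by a coupon-collector-type argument; the expected number of fitness evaluations is then $N$ (or $2N$) times that number of generations.

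First I would analyze the structure of a generation. Since every $\bmx\in\{0,1\}^n$ is Pareto optimal, any two members of $P\cup Q$ are incomparable or share the same objective vector, so fast non-dominated sorting returns the single front $F_1=P\cup Q$ of size $2N$; hence $F_1$ is the critical set and survival selection keeps the $N$ solutions of $P\cup Q$ with largest crowding distance. On OneMinMax, $f_2(\bmx)=n-f_1(\bmx)$, so sorting $P\cup Q$ by $f_2$ ascending is the reverse of sorting by $f_1$ ascending, and the total crowding distance of a non-extreme solution equals $\tfrac{2}{n}\bigl(f_1(\text{right neighbour})-f_1(\text{left neighbour})\bigr)$ in the $f_1$-sorted list, with the two $f_1$-extremes getting $\infty$. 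Grouping the sorted list into maximal blocks of equal $f_1$-value — at most $n+1$ blocks, since only $n+1$ values are possible — a solution gets a strictly positive crowding distance only if it sits at an end of its block, so at most $2(n+1)$ solutions have positive crowding distance.

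Next I would prove the coverage lemma: because $N\ge 4(n+1)>2(n+1)$, the $N$ survivors include \emph{all} solutions of positive crowding distance, so for every $f_1$-value present in $P\cup Q$ at least one solution carrying that value survives. Since $P\subseteq P\cup Q$, the set of Pareto-front points realized by the population never shrinks from one generation to the next. This monotonicity is the crucial structural fact, and it is precisely where the hypothesis $N=\Omega(n)$ enters; care is needed here, since one must argue that the truncation never deletes the last copy of a present objective value even under arbitrary tie-breaking among equal-crowding-distance solutions, which is why this is the main obstacle of the proof. Everything after it is comparatively routine.

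Finally I would bound the progress toward the target. Let $d_t$ be the distance from the reference point $(k,n-k)$ to the nearest currently covered Pareto point; by the coverage lemma $d_t$ is non-increasing, and it suffices to bound the first time $d_t=0$. If $d_t\ge 1$, let $v^{*}$ be the nearest covered $f_1$-value and $\bmx$ a population member with $f_1(\bmx)=v^{*}$; flipping exactly one suitable bit of $\bmx$ (a $1$-bit of $\bmx$ if $v^{*}<k$, of which there are $n-v^{*}$; a $0$-bit if $v^{*}>k$, of which there are $v^{*}$) yields an offspring covering a strictly closer point, and since $\bmx$ is mutated every generation this happens with probability at least $(\text{number of useful bits})/(en)$. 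Summing the resulting geometric waiting times, each of order $n/(\text{number of useful bits})$, over the at most $n$ intermediate values between $v^{*}$ and $k$ telescopes to a harmonic sum bounded by $enH_n=O(n\log n)$ generations, and each generation costs $O(N)$ fitness evaluations, giving the claimed $O(Nn\log n)$ bound (the case where $(k,n-k)$ is already covered at initialization is trivially included).
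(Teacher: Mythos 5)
Your proposal is correct and follows essentially the same route as the paper: the paper simply cites Lemma~1 of Zheng, Liu and Doerr (2021) for the key fact that with $N\ge 4(n+1)$ no objective value once covered is ever lost, and then runs exactly the nearest-covered-value / one-bit-flip / harmonic-sum argument you give, so the only real difference is that you reprove that coverage lemma from the crowding-distance definition. One detail to tighten in that reproof: since the sorts with respect to $f_1$ and $f_2$ may break ties among solutions with equal objective vectors differently, the $f_2$-sorted list need not be the exact reverse of the $f_1$-sorted list, so a solution can be a block end in one sort but not the other; the correct count is therefore at most $4$ positively-crowded solutions per objective value, i.e. at most $4(n+1)$ in total (which is precisely why the theorem assumes $N\ge 4(n+1)$), and with this corrected count your conclusion that all of them survive, hence every present value keeps a representative, goes through unchanged.
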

\begin{proof}
Note that all the solutions are Pareto optimal, thus any solution would be in $F_1$ after non-dominated sorting. Since the population size $N\geq 4(n+1)$, each objective vector in the Pareto front will not be lost once found, according to Lemma 1 in~\cite{zheng2021first}.

We let the reference point $\bmf(\bmz)=(n-k,k)$ for some $k\in[0..n]$. That is, $\bmz$ is a $n$ bits binary string with $|\bmz|_0=n-k$ and $|\bmz|_1=k$. Let $\bmx_t^*$ be the solution closest to $z$, i.e., $\bmx_t^*\in \arg\min_{\bmx\in P_t} d(\bmf(\bmx),\bmf(\bmz))$. Note that $\bmx_t^*$ will be selected as the parent solution and used to generate an offspring solution $\bm{x}'$ by bit-wise mutation. If $|\bmx_t^*|>k$, then flipping one of the 1-bits in $\bmx_t^*$ can generate $\bmx'$ such that $d(\bmf(\bmx'), \bmf(\bmz))< d(\bmf(\bmx_{t}^*), \bmf(\bmz))$, whose probability is $\frac{|\bmx_{t}^{*}|_1}{n}\cdot (1-\frac{1}{n})^{n-1}\geq \frac{|\bmx_{t}^{*}|_1}{en}$. Once $\bmx'$ is generated, there must exist one solution with the same objective vector as $\bmx'$ in the next population by Lemma 1 in \cite{zheng2021first}, implying that the closet distance to $\bmf(\bmz)$ can decrease by 1. The similar analysis also holds for the case $|\bmx_t^*|<k$. Note that the closet distance to $\bmf(\bmz)$ can decrease at most $n$ times, thus the expected number of generations for finding the reference point is at most $\sum_{i=1}^n en/i=O(n\log n)$, implying an upper bound $O(Nn\log n)$ on the expected number of fitness evaluations.
\end{proof}

The proof of Theorem~\ref{theo-omm-rnsgaii} is similar to that of Theorem~\ref{theo-omm-nsgaii}. The main difference is that R-NSGA-II relies on the distance to the reference point for population update, which ensures that the solution closest to the reference point can be maintained in the population.
\begin{theorem}\label{theo-omm-rnsgaii}
    For R-NSGA-II with any population size $N$ solving the OneMinMax problem, the expected number of fitness evaluations for finding the reference point $(k, n-k)$, $k\in[0..n]$, is $O(Nn\log n)$.
\end{theorem}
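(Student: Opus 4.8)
The plan is to mirror the proof of Theorem~\ref{theo-omm-nsgaii}, replacing the appeal to Lemma~1 of~\cite{zheng2021first} (which, via a large population, guaranteed that no Pareto-optimal objective vector is ever lost) by a direct argument exploiting the selection rule of R-NSGA-II. First I would observe that on OneMinMax every solution is Pareto optimal, so after fast non-dominated sorting we have $F_1 = P_t \cup Q_t$, of size $2N \ge N$; hence $F_1$ is itself the critical set, and the new population $P_{t+1}$ consists exactly of the $N$ solutions of $P_t\cup Q_t$ with the smallest Euclidean distance to the reference point $\bmf(\bmz)=(n-k,k)$. Since $d(\bmf(\bmx),\bmf(\bmz))=\sqrt 2\,\big||\bmx|_1-k\big|$, it is convenient to track $g_t := \min_{\bmx\in P_t}\big||\bmx|_1-k\big|$; the reference point is found exactly when $g_t = 0$.

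The key structural fact is a monotonicity property: $g_{t+1}\le g_t$ for every $t$. Indeed $P_{t+1}$ always contains an element attaining $\min_{\bmx\in P_t\cup Q_t}\big||\bmx|_1-k\big|$, and this minimum is at most $g_t$ because $P_t\subseteq P_t\cup Q_t$. Next I would establish the progress step. Assume $g_t\ge 1$ and let $\bmx^*_t\in P_t$ satisfy $\big||\bmx^*_t|_1-k\big| = g_t$; since R-NSGA-II applies bit-wise mutation to every member of $P_t$, $\bmx^*_t$ produces an offspring. If $|\bmx^*_t|_1 > k$, then flipping exactly one of its $|\bmx^*_t|_1 = k+g_t \ge g_t$ one-bits (and no other bit) yields an offspring $\bmx'$ with $\big||\bmx'|_1-k\big| = g_t-1$, an event of probability $\tfrac{|\bmx^*_t|_1}{n}(1-\tfrac1n)^{n-1}\ge \tfrac{g_t}{en}$; the case $|\bmx^*_t|_1<k$ is symmetric, flipping one of the $n-|\bmx^*_t|_1 = n-k+g_t\ge g_t$ zero-bits. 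It then remains to argue that such an $\bmx'$ survives selection: $\bmx'$ is strictly closer to $\bmf(\bmz)$ than every solution of $P_t$ (all of which have $g$-value $\ge g_t$), so every solution ranked ahead of $\bmx'$ in the ascending-distance order lies in $Q_t$, and there are at most $N$ of these including $\bmx'$ itself; hence $\bmx'$ is among the first $N$ and enters $P_{t+1}$, giving $g_{t+1}\le g_t-1$.

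Finally I would combine these with a fitness-level (harmonic-sum) argument. Since $g_0\le\max\{k,n-k\}\le n$ and $g_t$ is non-increasing and decreases by at least one in each generation with probability at least $g_t/(en)$ whenever $g_t\ge 1$, the expected number of generations until $g_t = 0$ is at most $\sum_{i=1}^{n} en/i = en H_n = O(n\log n)$. Each generation costs $N$ fitness evaluations (plus the $N$ evaluations for the initial population), so the expected number of fitness evaluations is $O(Nn\log n)$.

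I expect the only mildly delicate point to be the survival argument for the improved offspring: making precise that, because R-NSGA-II retains the $N$ points closest to $\bmf(\bmz)$ and every current population member is at distance $\ge \sqrt2\,g_t$, a freshly generated point at distance $\sqrt2\,(g_t-1)$ cannot be crowded out, even under adversarial tie-breaking. Everything else is routine; in particular no lower bound on $N$ is needed, which is exactly where R-NSGA-II gains over NSGA-II.
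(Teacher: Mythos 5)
Your proposal is correct and follows essentially the same route as the paper: reuse the progress/harmonic-sum argument from the NSGA-II analysis, with the distance-based survivor selection of R-NSGA-II guaranteeing (for any $N$) that the solution closest to the reference point is retained. In fact your explicit survival argument (all of $P_t$ is strictly farther than the improved offspring, so at most $N-1$ offspring can precede it in the ascending-distance order) fills in a detail the paper's proof only states informally.
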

\begin{proof}
    Based on the proof of Theorem~1, we find that the key point is to keep decreasing the closet distance to the reference point until the reference point is found. In NSGA-II, after non-dominated sorting, the algorithm would adopt crowding distance for further sorting, which requires $N\geq 4(n+1)$ such that the objective vector closest to the reference point can be maintained in the population. However, R-NSGA-II adopts preference information to further sort the solutions in $F_1$. Since the preference information here just means the distance to the reference point, the objective vector closest to the reference point can always be maintained regardless of the population size $N$. Following the analysis of Theorem~1, the expected number of iterations for finding the reference point is $O(Nn \log n)$.
\end{proof}
From the proofs of Theorems~\ref{theo-omm-nsgaii} and~\ref{theo-omm-rnsgaii}, we can find why R-NSGA-II can be better than NSGA-II. NSGA-II uses crowding distance to rank solutions, which has no bias towards the reference point. Thus, a large population size is required to ensure that each objective vector obtained will not be lost. Then, by continuously expanding in the Pareto front, the reference point can be found. However, R-NSGA-II prefers the solution closer to the reference point, and thus can always drive the population towards the reference point regardless of the population size. 

\subsection{Analysis on OneJumpZeroJump}

The OneJumpZeroJump problem as presented in Definition~4 is a bi-objective variant of the single-objective problem $\text{JUMP}_{n,k}$~\cite{droste2002analysis}. Its first objective is the same as $\text{JUMP}_{n,k}$, which aims at maximizing the number of 1-bits except a valley around $1^n$. This valley necessitates the flipping of precisely $k$ correct bits to traverse it from the local optimum. The second objective shares isomorphism with the first one, wherein the roles of 0s and 1s are exchanged. The Pareto set is $S^* = \{ \bmx\in \{0,1\}^n\mid |\bmx|_1\in [k..n-k]\cup\{0,n\}\}$, and the Pareto front is $F^* = \{(i, n+2k-i)\mid i\in[2k..n]\cup \{k, n+k\}\}$ whose size is $|F^*|=n-2k+3$. We let the reference point $\bmf(\bmz) = (n+k, k)$ to see if combining interactive decision making into NSGA-II can speed up the process of traversing the valley and finding the extreme objective vector in the Pareto front. Note that the solution corresponding to the reference point $(n+k, k)$ is the solution with all 1s.

\begin{definition}[OneJumpZeroJump~\cite{doerr2021theoretical}]
    The OneJumpZeroJump problem of size $n$ is to find $n$-bits binary strings which maximize
    \begin{align}
        f_1(\bmx) = \begin{cases}
k+|\bmx|_1, &\text{if }|\bmx|_1\leq n-k\ \text{or }\bmx=1^n,\\
n-|\bmx|_1, &\text{else;}
\end{cases}
\\
f_2(\bmx) = \begin{cases}
k+|\bmx|_0, &\text{if }|\bmx|_0\leq n-k\ \text{or }\bmx=0^n,\\
n-|\bmx|_0, &\text{else;}
\end{cases}
    \end{align}
    where $n \in \mathbb{N}$, $k\in[2..n/4]$, and $|\bmx|_1$ and $|\bmx|_0$ denote the number of 1-bits and 0-bits of $\bmx$, respectively.
\end{definition}
For the convenience of proof, we refer to the notation in~\cite{doerr2022first}, i.e., the inner part of the Pareto set is denoted as
\begin{align}
    S_{I}^*=\{\bmx\mid |\bmx|_1\in[k..n-k]\},
\end{align}
and the outer part of the Pareto set is denoted as
\begin{align}
    S_{O}^{*}=\{\bmx\mid |\bmx|_1\in \{0,n\}\};
\end{align}
the inner part of the Pareto front is denoted as
\begin{align}
    F_{I}^{*}=\bmf(S_{I}^*)=\{(i,n+2k-i)\mid i\in[2k..n]\},
\end{align}
and the outer part of the Pareto front is denoted as
\begin{align}
    F_O^*=\bmf(S_O^*)=\{(i,n+2k-i)\mid i\in\{k,n+k\}\}.
\end{align}

We prove in Theorems~\ref{theo-ojzj-nsgaii} and~\ref{theo-ojzj-rnsgaii} that NSGA-II can find the reference point in $O(Nn^k)$ expected number of fitness evaluations, where $N\geq 4|F^*|=4(n-2k+3)$, while R-NSGA-II using any population size $N \in o(n^k)$ can find the reference point in $O(n^k)$ expected number of fitness evaluations. Thus, R-NSGA-II can be $n$ times faster than NSGA-II. The proof of Theorem~\ref{theo-ojzj-nsgaii} is almost the same as that of Theorem~6 in~\cite{doerr2022first}. Compared with finding the whole Pareto front, finding a reference point, i.e., a single objective vector in the Pareto front, can be easier. However, our setting $(n+k, k)$ of the reference point is actually the most difficult one for OneJumpZeroJump, making that the expect running time for finding the reference point is the same as that for finding the whole Pareto front. 
\begin{theorem}\label{theo-ojzj-nsgaii}
    For NSGA-II solving the OneJumpZeroJump problem, if using a population size $N$ of at least $4(n+2k-3)$, then the expected number of fitness evaluations for finding the reference point $(n+k, k)$ is $O(Nn^k)$.
\end{theorem}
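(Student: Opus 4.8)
The plan is to adapt the proof of Theorem~6 in~\cite{doerr2022first}, which bounds the expected time for NSGA-II to cover the whole Pareto front of OneJumpZeroJump; since the reference point $(n+k,k)$ is the objective vector of $\bmz=1^n$, it in fact suffices to bound the time until $1^n$ is ever generated, and I would split this into a spreading phase and a single ``jump''. The recurring technical fact I would invoke first is the usual \emph{no-loss} property: because $N\ge 4(n+2k-3)\ge 4(n-2k+3)=4|F^*|$, once a Pareto-optimal objective vector appears in the combined parent-plus-offspring population $P\cup Q$, it has a representative in every subsequent population. This holds exactly as in Lemma~1 of~\cite{zheng2021first} and its OneJumpZeroJump version in~\cite{doerr2022first}: every Pareto-optimal solution belongs to the first non-dominated set $F_1$, $F_1$ carries at most $|F^*|$ distinct objective vectors, and truncating by crowding distance to a size of at least $4|F^*|$ always keeps at least one solution for each such vector.

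Phase~1: I would show that within $O(n\log n)$ expected generations the population contains a solution with $|\bmx|_1=n-k$, i.e.\ covers the boundary point $(n,2k)\in F_I^*$. As in~\cite{doerr2022first}, a random initial solution has $|\bmx|_1$ concentrated around $n/2\in[k..n-k]$ by a Chernoff bound (using $k\le n/4$), so the population quickly lies on the inner Pareto front $F_I^*$. Let $r_t$ be the largest number of $1$-bits present in $P_t$; by the no-loss property $r_t$ is nondecreasing, and since every population member is mutated once per generation, whenever $r_t<n-k$ the solution with $|\bmx|_1=r_t$ flips exactly one of its $n-r_t$ zero-bits, and nothing else, with probability at least $(n-r_t)/(en)$, producing a solution with $|\bmx|_1=r_t+1$ that then survives. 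Summing $\sum_i en/(n-i)=O(n\log n)$ over the needed increments bounds the expected number of generations until $r_t=n-k$. Note that, unlike the full-front analysis, I need neither $\ell_t=k$ nor the symmetric jump to $0^n$.

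Phase~2: once $(n,2k)$ is covered, every generation mutates some $\bmx$ with $|\bmx|_1=n-k$; flipping exactly its $k$ zero-bits and no other bit yields $\bmz=1^n$, whose objective vector is the reference point $(n+k,k)$. This event has probability $(1/n)^k(1-1/n)^{n-k}\ge 1/(en^k)$, so an expected number of at most $en^k$ additional generations suffices. Hence the total expected number of generations is $O(n\log n)+O(n^k)=O(n^k)$ for $k\ge 2$, and since NSGA-II performs $N$ fitness evaluations per generation, the expected number of fitness evaluations is $O(Nn^k)$.

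The main obstacle is exactly the no-loss property at population size $N=\Theta(n)$: arguing that the crowding-distance truncation never discards the last representative of a Pareto-optimal objective vector. This is the technical core of the existing NSGA-II runtime analyses~\cite{zheng2021first,doerr2022first}, and I would reuse it rather than reprove it. The remaining ingredients---the Chernoff bound at initialization, the coupon-collector estimate for spreading to $(n,2k)$, and the $\Theta(n^k)$ waiting time for a simultaneous $k$-bit flip---are all routine.
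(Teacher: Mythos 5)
Your proposal is correct and follows essentially the same route as the paper, which likewise reduces the theorem to the staged analysis of NSGA-II on OneJumpZeroJump from Doerr and Qu (2022) -- survival of Pareto-optimal objective vectors for $N\ge 4(n-2k+3)$, a ladder/coupon-collector phase reaching the objective vector $(n,2k)$ in $O(n\log n)$ generations, and a final $k$-bit jump to $1^n$ costing $O(n^k)$ generations, giving $O(Nn^k)$ evaluations; your observation that covering the whole inner front and the symmetric jump to $0^n$ are unnecessary is a mild streamlining of the same argument. The only small point to tidy is Phase~1's reliance on the Chernoff bound alone: for an \emph{expected}-time statement you should also account for the exponentially unlikely event that no initial solution lies in $S_I^*$ (handled in the cited stage-1 bound of order $N(4k/3)^k$ evaluations, whose contribution is negligible after multiplying by the failure probability).
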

\begin{proof}
    According to the analysis in~\cite{doerr2022first}, we divide the run of NSGA-II optimizing the OneJumpZeroJump benchmark into the following stages.  
    \begin{itemize}
        \item Stage 1: this stage starts after initialization and ends when at least one solution belonging to $S_I^*$ is found.
        \item Stage 2: this stage starts  after stage~1 and ends when the entire $F_I^*$ is covered.
        \item Stage 3: this stage after stage~2 and ends when the entire $F^*$ is covered.
    \end{itemize}
    The expected number of fitness evaluations of these three stages is $Ne(4k/3)^k$, $O(Nn\log n)$ and $O(Nn^k)$, respectively~\cite{doerr2022first}. Note that the goal of stage~3 is to find the extreme objective vectors $(n+k,k)$ and $(k, n+k)$, which is the most time-consuming part and also coincides with the goal of finding the reference point. Thus, the expected number of fitness evaluations for finding the reference point $(n+k, k)$ is $O(Nn^k)$.
\end{proof}

The basic proof idea of Theorem~\ref{theo-ojzj-rnsgaii} is similar to that of Theorem~\ref{theo-ojzj-nsgaii}, i.e., dividing the optimization procedure into three phases. The main difference is that during the optimization procedure, the population of R-NSGA-II will first converge to the objective vector $(n, 2k)$, and then jump to the reference point $(n+k, k)$.
\begin{theorem}\label{theo-ojzj-rnsgaii}
    For R-NSGA-II with any population size $N \in o(n^k)$ solving the OneJumpZeroJump problem, the expected number of fitness evaluations for finding the reference point $(n+k, k)$ is $O(n^k)$.
\end{theorem}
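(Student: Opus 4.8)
The plan is to adapt the three-stage decomposition used for NSGA-II in Theorem~\ref{theo-ojzj-nsgaii} (itself following Theorem~6 of~\cite{doerr2022first}), exploiting that the distance-based truncation of R-NSGA-II concentrates all selection pressure on the single objective vector $(n,2k)$, which is the unique point of $F_I^*$ closest to the reference point $\bmf(\bmz)=(n+k,k)$. Measured in generations, the stages are: Stage~1 ends once the population contains a solution in $S_I^*$; Stage~2 ends once every slot of the population is occupied by a solution $\bmx$ with $|\bmx|_1=n-k$; Stage~3 ends once $1^n$ is generated. Two elementary facts are used throughout: every $\bmx$ with $|\bmx|_1\in[k..n-k]$ is Pareto optimal and hence always first-ranked; and for such $\bmx$ the objective distance to the reference point equals $d(\bmf(\bmx),\bmf(\bmz))=\sqrt2\,(n-|\bmx|_1)$, which strictly decreases as $|\bmx|_1$ grows, so an inner solution attaining the largest $1$-bit count in $P_t$ is, among the non-$1^n$ first-ranked solutions, the one closest to $\bmf(\bmz)$.

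Stage~1 is typically over at initialization, since a uniformly random string satisfies $|\bmx|_1\in[k..n-k]$ with probability $1-o(1)$ for $k\le n/4$, and even conditioned on the rare complementary event an argument like in Theorem~\ref{theo-ojzj-nsgaii} shows $S_I^*$ is reached quickly, so this stage contributes only a negligible term. For Stage~2 I would show that the largest $1$-bit count among the inner solutions of $P_t$ is non-decreasing: a solution attaining this maximum is first-ranked and distance-minimal among the non-$1^n$ first-ranked solutions, hence always retained by the distance-based truncation (this must be checked both when $F_1$ alone overflows the population and when later-ranked sets enter it); and mutating it by a single $0\to1$ flip (probability at least $(n-|\bmx|_1)/(en)$, and it is mutated every generation) produces an inner solution with $1$-bit count one larger, whenever the maximum is still below $n-k$. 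A \textsc{OneMax}-type summation then bounds the time to reach $|\bmx|_1=n-k$ by $O(n\log n)$ generations. Once one solution with $|\bmx|_1=n-k$ is present, an offspring that flips no bit (probability $(1-1/n)^n\ge1/4$) is another such solution, so the number of $|\bmx|_1=n-k$ solutions grows geometrically and occupies all $N$ slots within a further $O(\log N)$ generations. It then remains to verify that afterwards no solution with $|\bmx|_1\in(n-k,n)$ is kept: each such ``gap'' solution is dominated by the $|\bmx|_1=n-k$ solution and has objective distance $\Omega(n)\gg\sqrt2\,k$ from $\bmf(\bmz)$, hence is never among the $N$ retained solutions (a gap solution appearing in an earlier stage could only accelerate the search and is ignored for the upper bound).

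In Stage~3 the population is made of $N$ solutions with $|\bmx|_1=n-k$, and $1^n$ can only be created by flipping the $k$ remaining $0$-bits of some such solution simultaneously, which occurs with probability at least $(1/n)^k(1-1/n)^{n-k}\ge n^{-k}/4$ in each mutation. Since all $N$ of these solutions are mutated every generation, the probability of generating $1^n$ in a generation is $\Omega(\min(1,N n^{-k}))$, so Stage~3 takes $O(\max(1,n^k/N))$ generations in expectation, i.e., $O(\max(N,n^k))=O(n^k)$ fitness evaluations once $N\in o(n^k)$; the $O(n\log n+\log N)$ generations of the first two stages add $O(Nn\log n)$ evaluations. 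Summing the three contributions yields the stated $O(n^k)$ bound.

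The step I expect to be the main obstacle is Stage~2: one must make fully rigorous that the largest inner $1$-bit count is monotone and that no ``gap'' solution is ever preferred to the $(n,2k)$-plateau, which requires treating separately the regime where $F_1$ alone overflows the population (truncation purely by distance to $\bmf(\bmz)$) and the regime where dominated solutions enter it (where one must pin down exactly which of them survive), together with controlling the geometric growth of the plateau and confirming that the first-stage generation counts, multiplied by $N$, stay within the target $O(n^k)$.
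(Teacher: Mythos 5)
Your proposal follows essentially the same route as the paper's proof: a three-stage argument (reach the inner Pareto set and climb to the plateau $|\bmx|_1=n-k$, let the plateau solutions take over the population since they are rank-one and distance-minimal under the reference-point truncation, then jump to $1^n$ with per-mutation probability $\Theta(n^{-k})$), and your stage bounds are if anything slightly sharper (an $O(n\log n)$-generation OneMax-style climb and an $O(\log N)$-generation geometric takeover versus the paper's $O(n^2)$ and $O(N)$). The one soft spot you flag at the end, that the early-stage generation counts multiplied by $N$ must stay within $O(n^k)$ for every $N\in o(n^k)$, is glossed over in exactly the same way in the paper's own proof, so it is not a gap specific to your argument.
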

\begin{proof}
    We divide the optimization procedure into three stages. Note that we pessimistically assume that the reference point is not found in stages~1 and~2.
    \begin{itemize}
        \item Stage 1: this stage starts after initialization and ends when the solution with $(n-k)$ 1-bits is maintained in the population.
        \item Stage 2: this stage starts after stage~1 and ends when the population is full of the solutions with $(n-k)$ 1-bits.
        \item Stage 3: this stage starts after stage~2 and ends when the reference point is found, i.e., the solution $1^n$ is maintained in the population.
    \end{itemize}

    \textbf{Stage 1.} We first analyze the expected number of generations until finding a solution in $S_I^*$. To this end, we can pessimistically assume that any solution in the initial population has at most $(k-1)$ 1-bits or at least $(n-k+1)$ 1-bits. 
    Without loss of generality, we can assume that one solution $\bmx$ in the population has at most $(k-1)$ 1-bits. Then, selecting $\bmx$ and flipping $(k-|\bmx|_1)$ 0-bits can generate a solution in $S_I^*$. Note that each solution in the population will be selected for mutation in each generation, thus the probability of generating a solution in $S_I^*$ is at least 
	$$\frac{\binom{n-|\bmx|_1}{k-|\bmx|_1}}{n^{k-|\bmx|_1}}\cdot \Big(1-\frac{1}{n}\Big)^{n-k+|\bmx|_1}\ge \frac{1}{e}\cdot \frac{\binom{n-|\bmx|_1}{k-|\bmx|_1}}{n^{k-|\bmx|_1}}\ge \frac{1}{ek^k},$$
    where the last inequality is by the proof of Theorem~1 in~\cite{bian2023sms}. Note that the solution in $S_I^*$ dominates any non-Pareto optimal solution, and has a smaller distance to the reference point compared with $0^n$, thus the newly generated solution must be maintained in the population.
    Thus, the expected number of generations for finding a solution in $S_I^*$ is at most $e k^k$. 

    Next, we analyze the expected number of generations until finding a solution with $(n-k)$ 1-bits.
    By flipping a 0-bit of the solution with the most 1-bits in $S_I^*$, a solution in $S_I^*$ with smaller distance to the reference point can be generated, whose probability is at least $\frac{1}{n}\cdot (1-\frac{1}{n})^{n-1}\geq \frac{1}{en}$. Note that the new solution is the most preferred and thus will be maintained in the population. Repeating the above process at most $(n-2k)$ times, the solution with $(n-k)$ 1-bits can be maintained in the population. Thus, the total expected number of generations of stage~1 is at most $e k^k+(n-2k)\cdot en$.
      
    
    \textbf{Stage 2.} Note that any non-Pareto optimal solution must be dominated by the solution with $(n-k)$ 1-bits, and any Pareto optimal solution (except $1^n$) has smaller $f_1$ value and larger $f_2$ value than the solution with $(n-k)$ 1-bits; thus the solution with $(n-k)$ 1-bits is the most preferred. After a solution with $(n-k)$ 1-bits is found in stage~1, it can consecutively copy itself by flipping none of the bits, and occupy the whole population. The probability of the copying event is at least $(1-\frac{1}{n})^n\geq (1-\frac{1}{n})\cdot \frac{1}{e}\ge \frac{1}{6}$. As the duplication procedure needs to repeat at most $N$ times, the expected number of generations of stage~2 is at most $6N$.
        
    \textbf{Stage 3.} For a solution $\bmx$ with $|\bmx|_1=n-k$,  the flipping of the $k$ 0-bits can generate the reference point, whose probability is $(\frac{1}{n})^k\cdot(1-\frac{1}{n})^{n-k}\geq \frac{1}{en^k}$. Since all the $N$ solutions in the population have $(n-k)$ 1-bits, the probability of generating the reference point in each generation is at least $1-(1-\frac{1}{en^k})^N$. Thus, the expected number of fitness evaluations is at most $N/(1-(1-\frac{1}{en^k})^N)=O(n^k)$, where the equality holds by $N=o(n^k)$.

    In conclusion, the whole process takes $O(n^k)$ expected number of fitness evaluations to find the reference point.
\end{proof}

From the proofs of Theorems~\ref{theo-ojzj-nsgaii} and~\ref{theo-ojzj-rnsgaii}, we can find the main reason for the acceleration by R-NSGA-II. NSGA-II requires $N\geq 4(n-2k+3)$ to ensure that every objective vector in the Pareto front obtained by the algorithm will not be lost in the future. However, R-NSGA-II can drive the whole population to converge towards the reference point and thus increase the probability of jumping across the valley, which can eliminate the influence of the population size.  

The above comparison on OneMinMax and OneJumpZeroJump discloses two advantages of R-NSGA-II. First, due to the preference to the reference point, it can automatically maintain the solution closest to the reference point in the objective space, without requirement on the population size $N$. Second, it can drive the whole population to converge towards the reference point, bringing acceleration. For example, on OneJumpZeroJump, all solutions in the population will keep approaching the reference point, which can significantly increase the probability of leaping to the optima. 

\begin{figure*}[htbp]\centering
\begin{minipage}[c]{0.24\linewidth}
\centering
    \includegraphics[width=1\linewidth]{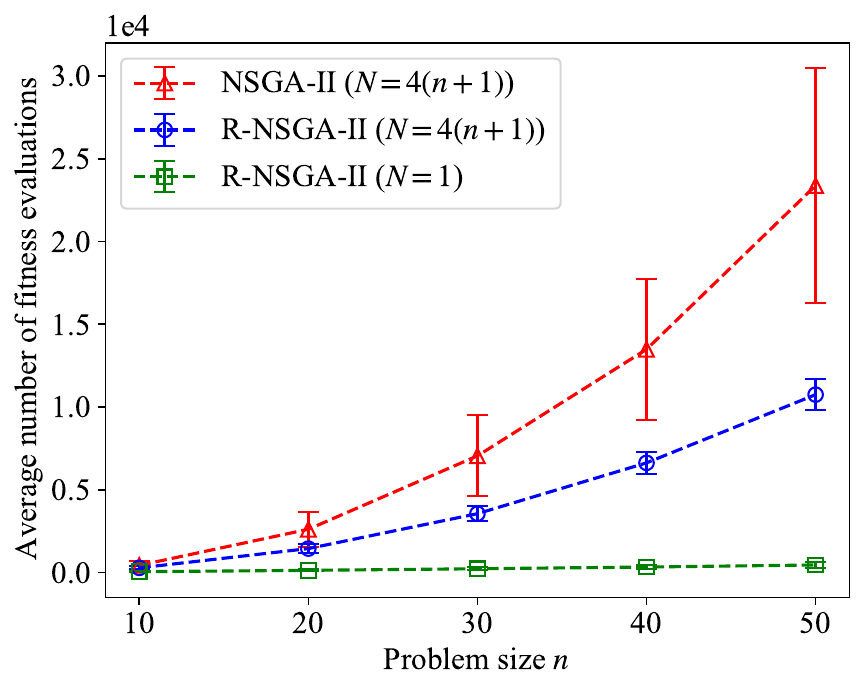}
\end{minipage}
\begin{minipage}[c]{0.24\linewidth}
\centering
    \includegraphics[width=1\linewidth]{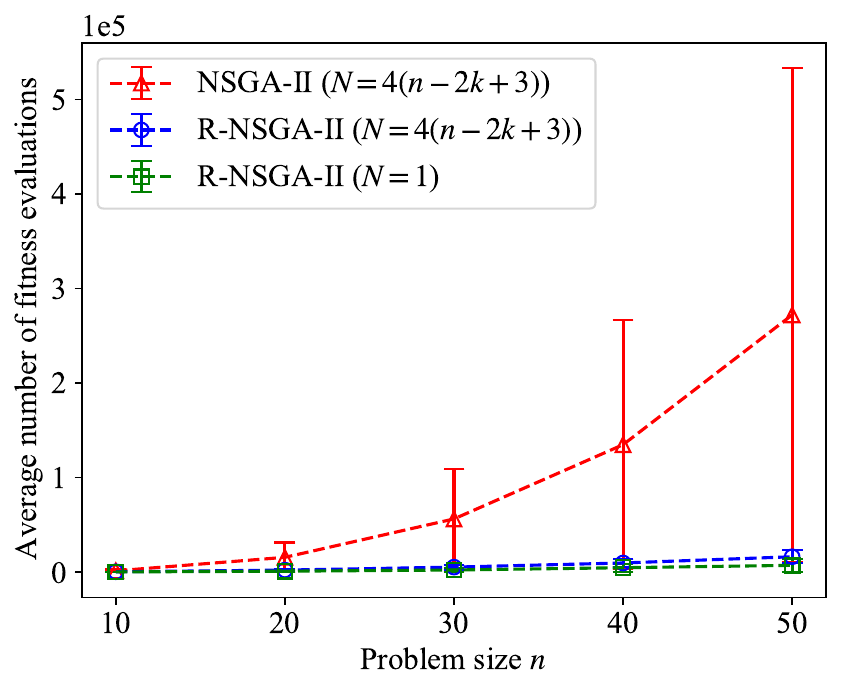}
\end{minipage}
\begin{minipage}[c]{0.24\linewidth}
\centering
    \includegraphics[width=1\linewidth]{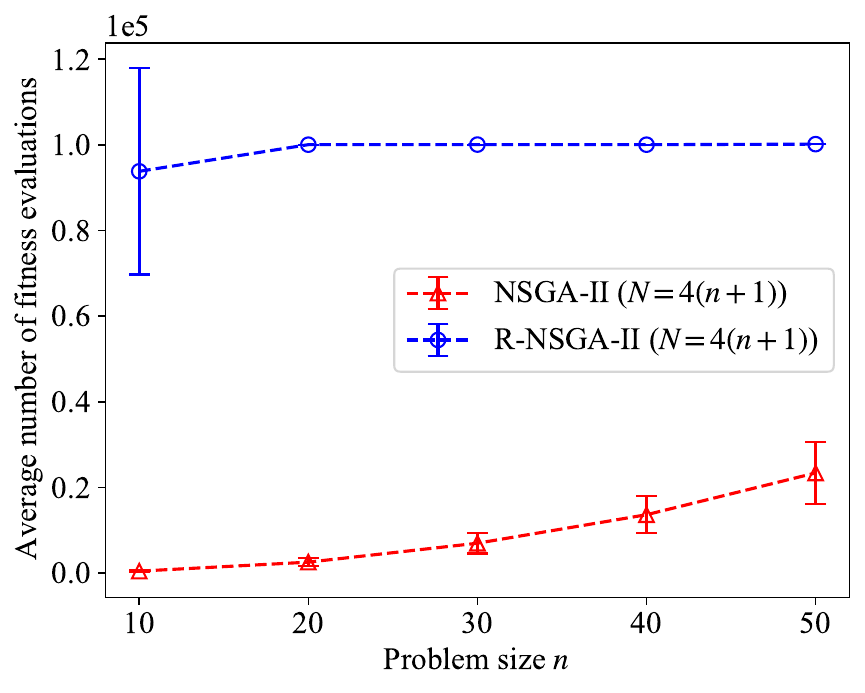}
\end{minipage}\vspace{0.1em}
\begin{minipage}[c]{0.24\linewidth}
\centering
    \includegraphics[width=1\linewidth]{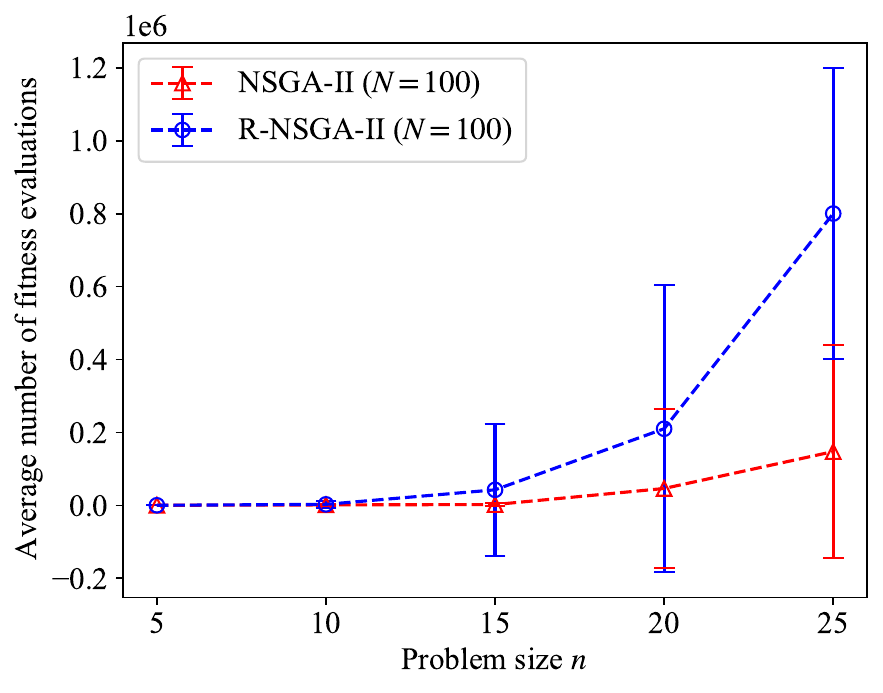}
\end{minipage}\\
\begin{minipage}[c]{0.24\linewidth}\centering
    (a) OneMinMax
\end{minipage}\ \
\begin{minipage}[c]{0.24\linewidth}\centering
    (b) OneJumpZeroJump
\end{minipage}
\begin{minipage}[c]{0.24\linewidth}\centering
    (c) OneMinMax*
\end{minipage}
\begin{minipage}[c]{0.24\linewidth}\centering
    (d) NK-Landscape
\end{minipage}
\caption{Average number of fitness evaluations of 1000 independent runs of NSGA-II and R-NSGA-II for solving the OneMinMax, OneJumpZeroJump, OneMinMax* and bi-objective NK-landscape problems, respectively.}\label{fig:artificial}
\end{figure*}

\section{Can R-NSGA-II Always Work?}

In the previous section, we have shown the advantage of R-NSGA-II theoretically, which echoes what people said, i.e., combining decision making into MOEAs can accelerate finding the DM's favorite solution. However, we will show that R-NSGA-II may fail sometimes. For this purpose, we consider a variant of OneMinMax, called OneMinMax* as in Definition~\ref{def-OneMinMax*}. The only difference is that the objective vector of the solution $0^n$ is set to $(-n, 2n)$. Now, the Pareto front is $F^* = \{(k, n-k)\mid k\in[0..n-1]\}\cup\{(-n, 2n)\}$, and any solution is still Pareto optimal. We set the reference point to be $\bmf(\bmz)=(-n,2n)$, i.e., $\bmz=0^n$.

\begin{definition}[OneMinMax*]\label{def-OneMinMax*}
    The OneMinMax* problem of size $n$ is to find $n$-bits binary strings which maximize
    \begin{equation}
    \begin{aligned}
        \bmf(\bmx) \!= \!\begin{cases}
(\sum_{i=1}^{n}(1-x_i),\sum_{i=1}^{n}x_i), &\text{if }\bmx\!\in\!\{0,1\}^n\setminus\{0^n\},\\
(-n, 2n),&\text{if }\bmx=0^n.
\end{cases}
    \end{aligned}
    \end{equation}
\end{definition}

We prove in Theorems~\ref{theo-omm*-nsgaii} and~\ref{theo-omm*-rnsgaii} that NSGA-II with $N\geq 4(n+1)$ can find the reference point in $O(Nn\log n)$ expected number of fitness evaluations, while R-NSGA-II using any polynomial population size requires $\Omega(n^{n/4})$ expected running time. Thus, R-NSGA-II can be exponentially slower than NSGA-II. The proof of Theorem~\ref{theo-omm*-nsgaii} is the same as that of Theorem~\ref{theo-omm-nsgaii}, because NSGA-II can find any reference point by continuously expanding in the Pareto front.

\begin{theorem}\label{theo-omm*-nsgaii}
    For NSGA-II solving the OneMinMax* problem, if using a population size $N$ of at least $4(n+1)$, then the expected number of fitness evaluations for finding the reference point $(-n,2n)$ is $O(Nn\log n)$.
\end{theorem}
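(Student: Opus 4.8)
The plan is to follow the proof of Theorem~\ref{theo-omm-nsgaii} essentially verbatim, tracking the solution in the current population that has the fewest $1$-bits and showing that this quantity keeps dropping by one until $0^n$ — whose objective vector is exactly the reference point $(-n,2n)$ — enters the population. First I would record three structural facts about OneMinMax*: (i) every search point is still Pareto optimal, so after non-dominated sorting the whole combined population $P\cup Q$ is the single front $F_1$; (ii) the Pareto front still has $|F^*| = n+1$ elements, so the hypothesis $N\ge 4(n+1)=4|F^*|$ is precisely what is needed to invoke Lemma~1 of~\cite{zheng2021first}, namely that once an objective vector of $F^*$ appears in the population it is never lost; and (iii) in the objective vector $(n-j,j)$ with $j\in[1..n]$ the second coordinate equals $|\bmx|_1$, while $(-n,2n)$ corresponds uniquely to $0^n$, so an objective vector determines the number of $1$-bits of any solution mapped to it.

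Write $m_t := \min_{\bmx\in P_t}|\bmx|_1$. By facts (ii) and (iii), $(m_t)_t$ is non-increasing, and $m_t=0$ is equivalent to $0^n\in P_t$, i.e.\ to having found the reference point. Next I would lower-bound the per-generation probability of decreasing $m_t$ while $m_t\ge 1$. Since in Algorithm~\ref{alg:nsgaii} every member of $P$ is mutated exactly once, a solution $\bmx$ with $|\bmx|_1=m_t$ is always among the parents; flipping exactly one of its $m_t$ one-bits and no other bit has probability $\frac{m_t}{n}(1-\frac1n)^{n-1}\ge \frac{m_t}{en}$ and produces an offspring with $m_t-1$ one-bits, whose objective vector lies in $F^*$ and is therefore present in $P_{t+1}$ by Lemma~1 of~\cite{zheng2021first}. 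Summing over the at most $n$ decreases of $m_t$, the expected number of generations until $m_t=0$ is at most $\sum_{i=1}^{n} en/i = O(n\log n)$; each generation uses $N$ fitness evaluations, giving the claimed $O(Nn\log n)$ bound.

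I do not expect a real obstacle here, since the core mechanism (population expanding across the Pareto front, no objective vector ever discarded, single-bit flip probability, harmonic-sum bound) is identical to Theorem~\ref{theo-omm-nsgaii}. The one place that needs a sanity check is that the "isolated" relocation of $0^n$'s objective vector to $(-n,2n)$ does not break anything: it changes neither the all-points-Pareto-optimal property nor $|F^*|=n+1$, so Lemma~1 of~\cite{zheng2021first} still applies with $N\ge 4(n+1)$; and although $(-n,2n)$ is far from the rest of the front in objective space, this is irrelevant because NSGA-II ranks the critical front by crowding distance rather than by distance to the reference point, so the preservation argument for a generated copy of $0^n$ is exactly the same as for any other Pareto-optimal point. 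Everything else carries over unchanged from Theorem~\ref{theo-omm-nsgaii}.
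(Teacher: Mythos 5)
Your proposal is correct and matches the paper's intended argument: the paper gives no separate proof for this theorem, stating only that it is the same as Theorem~\ref{theo-omm-nsgaii} (population expansion over the Pareto front plus the non-loss guarantee from Lemma~1 of~\cite{zheng2021first}, which carries over since OneMinMax* still has $n+1$ Pareto-front points and all solutions Pareto optimal). Your one genuine refinement is making the progress measure explicit as the minimum number of $1$-bits rather than objective-space distance to the reference point, which is exactly the right adaptation here, since for OneMinMax* the Euclidean distance to $(-n,2n)$ is not a suitable potential.
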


R-NSGA-II drives the whole population to converge towards the reference point in the objective space. However, on the OneMinMax* problem, a solution closer to the reference point $(-n,2n)$ in the objective space will be farther from the corresponding solution $0^n$ in the decision space. This leads to the inefficiency of R-NSGA-II, which is also the main proof idea of Theorem~\ref{theo-omm*-rnsgaii}. 

\begin{theorem}\label{theo-omm*-rnsgaii}
    For R-NSGA-II with any polynomial population size $N$ solving the OneMinMax* problem, the expected number of fitness evaluations for finding the reference point $(-n,2n)$ is $\Omega(n^{n/4})$.
\end{theorem}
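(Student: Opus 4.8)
The plan is to exploit the fact that on OneMinMax* the reference point $(-n,2n)$ sits ``behind'' the ordinary front. Among search points $\bmx\neq 0^n$ the Euclidean distance to $(-n,2n)$ equals $\sqrt{2}\,(2n-|\bmx|_1)$, which strictly decreases as $|\bmx|_1$ grows, while only $0^n$ itself attains distance $0$. Since every search point of OneMinMax* is Pareto optimal, non-dominated sorting places all of $P\cup Q$ into $F_1$, so one generation of R-NSGA-II simply retains the $N$ members of $P\cup Q$ closest to the reference point; equivalently, it keeps $0^n$ if it has ever been produced, and otherwise keeps the $N$ search points with the most $1$-bits. Thus R-NSGA-II behaves like an elitist hill climber maximizing $|\bmx|_1$, i.e., it is driven towards $1^n$, which is as far as possible from the target $0^n$ in the decision space.

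The first step is a monotonicity lemma: as long as $0^n$ has not yet been generated, the smallest $1$-bit count present in the population, $c_t := \min_{\bmx\in P_t}|\bmx|_1$, is non-decreasing in $t$. The reason is that truncation acts on $P\cup Q$, which already contains the $N$ parents and hence at least $N$ search points with at least $c_t$ ones; since truncation prefers more ones, all $N$ survivors have at least $c_t$ ones --- unless $0^n$ was just produced, which is exactly the event we are waiting for. I would need to inspect the tie case (parents and offspring simultaneously attaining the threshold $c_t$), but it is harmless because all such points have precisely $c_t$ ones.

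The second step controls the starting value and the per-generation hitting probability. By a Chernoff bound, a uniformly random bit string has fewer than $n/4$ ones with probability $e^{-\Omega(n)}$, so a union bound over the $\mathrm{poly}(n)$ initial solutions gives an event $A$ of probability $1-o(1)$ on which every initial search point --- and therefore, by the monotonicity lemma, every search point in every population until $0^n$ appears --- has at least $n/4$ ones. Generating $0^n$ from a parent with $j\ge n/4$ ones by bit-wise mutation requires flipping all $j$ of its $1$-bits, probability at most $n^{-n/4}$; a union bound over the $N$ offspring of a generation bounds the per-generation success probability (conditioned on $A$) by $N\,n^{-n/4}$. A standard waiting-time argument then shows that, conditioned on $A$, the expected number of generations before $0^n$ is generated is at least $n^{n/4}/N$. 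Multiplying by the $N$ fitness evaluations spent per generation, multiplying by $\Pr[A]=1-o(1)$, and using $N = o(n^{n/4})$ yields the claimed $\Omega(n^{n/4})$ bound; note that the factor $N$ cancels, which is why the bound holds uniformly over all polynomial population sizes.

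I expect the monotonicity lemma to be the main obstacle: one must argue carefully, against the actual selection of Algorithm~\ref{alg:nsgaii} (the critical-set bookkeeping together with R-NSGA-II's distance-based secondary sort), that no generation can decrease the minimum $1$-bit count, including the handling of equidistant solutions. Once that invariant is established, the Chernoff estimate for $A$ and the geometric lower bound on the waiting time are routine.
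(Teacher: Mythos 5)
Your proposal is correct and follows essentially the same route as the paper's proof: a Chernoff bound ensuring every initial solution has at least $n/4$ one-bits, the observation that since all points lie in $F_1$ the distance-based truncation prefers solutions with more one-bits so the population never admits a solution with fewer than $n/4$ ones (the paper states this preference argument directly, you merely formalize it as a monotonicity invariant on $\min_{\bmx\in P_t}|\bmx|_1$), and the per-generation success bound $N n^{-n/4}$ combined with a waiting-time argument, with the factor $N$ cancelling against the $N$ evaluations per generation. No gaps.
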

\begin{proof}
    By Chernoff bound, an initial solution randomly selected from $\{0,1\}^n$ has at least $n/4$ 1-bits with probability $1-e^{-\Omega(n)}$. Thus, the probability that any individual in the initial population has at least $n/4$ 1-bits is at least $(1-e^{-\Omega(n)})^{N}\geq 1-N e^{-\Omega(n)}=1-o(1)$, where the equality holds because $N$ is a polynomial of $n$.

    Now, we assume that the above event happens, i.e., any solution in the initial population has at least $n/4$ 1-bits. Note that any solution in the solution space is Pareto optimal, thus it will belong to $F_1$ after non-dominated sorting. Meanwhile, since the reference point is $(-n, 2n)$, the solution with smaller $f_1$ and larger $f_2$ is preferred. Thus, during the optimization procedure, the solution with more 1-bits must be preferred, implying that the population can maintain only solutions with at least $n/4$ 1-bits. To find the reference point, i.e., the solution $0^n$, the 1-bits of a parent solution must be flipped simultaneously. As each solution has at least $n/4$ 1-bits, the probability is at most $1/n^{n/4}$. Thus, in each generation, the probability of finding $0^n$ is at most 
    \begin{align}
    1-\Big(1-1/n^{n/4}\Big)^N\leq N/n^{n/4},
    \end{align}
    where the inequality is by Bernoulli's inequality. 

    Combining the above analyses, the expected number of fitness evaluations for finding the reference point is at least $(1-o(1))\cdot (n^{n/4}/N)\cdot N=\Omega(n^{n/4})$, where the factor $N$ corresponds to $N$ fitness evaluations in each generation.
\end{proof}

\section{Experiments}

In the previous sections, we have proved that combining interactive decision making into NSGA-II can bring acceleration for solving the OneMinMax and OneJumpZeroJump problems, while leading to significant deceleration for solving OneMinMax*. In this section, we first conduct experiments to verify these theoretical results, and then show that the finding from the analysis of OneMinMax* also holds on the combinatorial problem multi-objective NK-landscape~\cite{aguirre2007working}. The data and code are provided in the supplementary material.

For each problem OneMinMax, OneJumpZeroJump or OneMinMax*, we compare the number of fitness evaluations of NSGA-II and R-NSGA-II until finding the reference point, which is set to $(0,n)$, $(n+k,k)$ and $(-n,2n)$, respectively, consistent with our theoretical analysis. The first two reference points correspond to the solution with all 1s, and the last one corresponds to the solution with all 0s. We set the problem size $n$ from 10 to 50, with a step of 10. For each problem size $n$ and each algorithm, we conduct 1000 independent runs, and record the average and standard deviation of the number of fitness evaluations required to find the reference point.

For the OneMinMax problem, we set the population size $N$ of NSGA-II and R-NSGA-II to $4(n+1)$ and $1$, respectively, as suggested by Theorems~\ref{theo-omm-nsgaii} and~\ref{theo-omm-rnsgaii}. Figure~\ref{fig:artificial}(a) shows that R-NSGA-II with $N=1$ can achieve a clear acceleration compared with NSGA-II. We also test the performance of R-NSGA-II with the population size $N=4(n+1)$ to see whether the acceleration is only brought by the decrease of the population size. We can observe that even using the same population size, R-NSGA-II is still faster. 

For the OneJumpZeroJump problem, the parameter $k$ is set to $2$. The population size $N$ of NSGA-II and R-NSGA-II is set to $4(n-2k+3)$ and $1$, respectively, according to Theorems~\ref{theo-ojzj-nsgaii} and~\ref{theo-ojzj-rnsgaii}. We can see from Figure~\ref{fig:artificial}(b) that R-NSGA-II with $N=1$ needs much less fitness evaluations than NSGA-II with $4(n-2k+3)$, to find the reference point. Moreover, by increasing the population size $N$ to $4(n-2k+3)$, the average time of R-NSGA-II only increases slightly, which is consistent with Theorem~\ref{theo-ojzj-rnsgaii} that the expected running time of R-NSGA-II does not depend on $N$.

For the OneMinMax* problem, we set the population size $N$ of NSGA-II to $4(n+1)$ as suggested by Theorem~\ref{theo-omm*-nsgaii}, and use the same $N$ for R-NSGA-II for fairness. As we find that it's very time-consuming for R-NSGA-II to find the reference point, we set a maximum number of fitness evaluation, i.e., $10^5$. Figure~\ref{fig:artificial}(c) shows that NSGA-II is clearly better, and R-NSGA-II almost reaches the maximum time limit, which is consistent with Theorem~\ref{theo-omm*-rnsgaii} that R-NSGA-II requires at least exponential time to find the reference front.

From the proof of Theorem~\ref{theo-omm*-rnsgaii}, we find that for the OneMinMax* problem, a solution closer to the reference point in the objective space is farther from it in the decision space, leading to the inefficiency of R-NSGA-II. Such characteristic may hold in complex real-world problems, where R-NSGA-II can be inefficient. To validate this finding, we compare R-NSGA-II with NSGA-II on the bi-objective NK-Landscape problem, where the problem size $n$ is set from 5 to 25, with a step of 5. For each $n$, we choose a set of randomly initialized parameters, i.e., fitness contribution $c_{ij}$ and loci $k_{ij}$, to generate the problem, following~\cite{li2023moeas}. The parameter $K$ is set to 3, and the population size $N$ of NSGA-II and R-NSGA-II is both set to 100 for fairness. We acquire the Pareto front by exhaustive enumeration over the whole solution space $\{0,1\}^n$, and then randomly choose one from the Pareto front as the reference point, which is $[0.821, 0.483]$, $[0.643,0.647]$, $[0.770,0.481]$, $[0.741,0.521]$ and $[0.760,0.541]$ for the problem size $n \in \{5,10,\ldots,25\}$, respectively. For R-NSGA-II, we set a maximum number of fitness evaluations, i.e., $10^6$. Figure~\ref{fig:artificial}(d) shows that R-NSGA-II is much worse than NSGA-II.


\section{Conclusion}

In this paper, we make a first step towards theoretically analyzing the expected running time of practical iMOEA. We consider the interactive variant, R-NSGA-II, of the popular NSGA-II. We prove that R-NSGA-II can be asymptotically faster than NSGA-II for solving the OneMinMax and OneJumpZeroJump problems, providing theoretical justification for iMOEA. Furthermore, we find that the integration of interactive decision-making within MOEAs does not always expedite convergence by proving that R-NSGA-II can be exponentially slower than NSGA-II for solving a variant of the OneMinMax problem. Though being proved in a special case, this finding may hold more generally, especially on complex real-world problems with rugged fitness landscape. We have empirically verified it on the popular practical problem NK-landscape. This finding suggests the importance of striking a balance between interactive decision-making and diversity preservation within the population, which might be helpful for designing efficient iMOEAs in practice. 

\bibliography{main}

\begin{thebibliography}{53}
\providecommand{\natexlab}[1]{#1}

\bibitem[{Aguirre and Tanaka(2007)}]{aguirre2007working}
Aguirre, H.~E.; and Tanaka, K. 2007.
\newblock Working principles, behavior, and performance of {MOEAs} on {MNK}-landscapes.
\newblock \emph{European Journal of Operational Research}, 181(3): 1670--1690.

\bibitem[{An and Li(2007)}]{weigang2007interactive}
An, W.; and Li, W. 2007.
\newblock Interactive multi-objective optimization design for the pylon structure of an airplane.
\newblock \emph{Chinese Journal of aeronautics}, 20(6): 524--528.

\bibitem[{Bandte and Malinchik(2004)}]{bandte2004broad}
Bandte, O.; and Malinchik, S. 2004.
\newblock A broad and narrow approach to interactive evolutionary design--an aircraft design example.
\newblock In \emph{Proceedings of the 6th ACM Conference on Genetic and Evolutionary Computation (GECCO'04)}, 883--895. Seattle, WA.

\bibitem[{Battiti and Passerini(2010)}]{battiti2010brain}
Battiti, R.; and Passerini, A. 2010.
\newblock Brain--computer evolutionary multiobjective optimization: {A} genetic algorithm adapting to the decision maker.
\newblock \emph{IEEE Transactions on Evolutionary Computation}, 14(5): 671--687.

\bibitem[{Beume, Naujoks, and Emmerich(2007)}]{beume2007sms}
Beume, N.; Naujoks, B.; and Emmerich, M. 2007.
\newblock SMS-EMOA: Multiobjective selection based on dominated hypervolume.
\newblock \emph{European Journal of Operational Research}, 181(3): 1653--1669.

\bibitem[{Bian and Qian(2022)}]{bian2022running}
Bian, C.; and Qian, C. 2022.
\newblock Better running time of the non-dominated sorting genetic algorithm II (NSGA-II) by using stochastic tournament selection.
\newblock In \emph{Proceedings of the 17th International Conference on Parallel Problem Solving from Nature (PPSN'22)}, 428--441. Dortmund, Germany.

\bibitem[{Bian, Qian, and Tang(2018)}]{bian2018general}
Bian, C.; Qian, C.; and Tang, K. 2018.
\newblock A general approach to running time analysis of multi-objective evolutionary algorithms.
\newblock In \emph{Proceedings of the 27th International Joint Conference on Artificial Intelligence (IJCAI’18)}, 1405--1411. Stockholm, Sweden.

\bibitem[{Bian et~al.(2023)Bian, Zhou, Li, and Qian}]{bian2023sms}
Bian, C.; Zhou, Y.; Li, M.; and Qian, C. 2023.
\newblock Stochastic population update can provably be helpful in multi-objective evolutionary algorithms.
\newblock In \emph{Proceedings of the 32nd International Joint Conference on Artificial Intelligence (IJCAI'23)}, 5513--5521. Macao, SAR, China.

\bibitem[{Brockhoff et~al.(2012)Brockhoff, L{\'o}pez-Ib{\'a}nez, Naujoks, and Rudolph}]{brockhoff2012runtime}
Brockhoff, D.; L{\'o}pez-Ib{\'a}nez, M.; Naujoks, B.; and Rudolph, G. 2012.
\newblock Runtime analysis of simple interactive evolutionary biobjective optimization algorithms.
\newblock In \emph{Proceedings of the 12th International Conference on Parallel Problem Solving from Nature (PPSN'12)}, 123--132. Taormina, Italy.

\bibitem[{Bäck(1996)}]{back1996evolutionary}
Bäck, T. 1996.
\newblock \emph{Evolutionary Algorithms in Theory and Practice: Evolution Strategies, Evolutionary Programming, Genetic Algorithms}.
\newblock Oxford, UK: Oxford University Press.

\bibitem[{Cerf et~al.(2023)Cerf, Doerr, Hebras, Kahane, and Wietheger}]{cerf2023first}
Cerf, S.; Doerr, B.; Hebras, B.; Kahane, Y.; and Wietheger, S. 2023.
\newblock The first proven performance guarantees for the {N}on-{D}ominated {S}orting {G}enetic {A}lgorithm {II} ({NSGA-II}) on a combinatorial optimization problem.
\newblock In \emph{Proceedings of the 32nd International Joint Conference on Artificial Intelligence (IJCAI'23)}, 5522--5530. Macao, SAR, China.

\bibitem[{Dang et~al.(2023{\natexlab{a}})Dang, Opris, Salehi, and Sudholt}]{dang2023proof}
Dang, D.; Opris, A.; Salehi, B.; and Sudholt, D. 2023{\natexlab{a}}.
\newblock A proof that using crossover can guarantee exponential speed-ups in evolutionary multi-objective optimisation.
\newblock In \emph{Proceedings of the 37th AAAI Conference on Artificial Intelligence (AAAI'23)}, 12390--12398. Washington, DC.

\bibitem[{Dang et~al.(2023{\natexlab{b}})Dang, Opris, Salehi, and Sudholt}]{dang2023analysing}
Dang, D.-C.; Opris, A.; Salehi, B.; and Sudholt, D. 2023{\natexlab{b}}.
\newblock Analysing the robustness of NSGA-II under noise.
\newblock In \emph{Proceedings of the 25th ACM Conference on Genetic and Evolutionary Computation (GECCO'23)}, 642–651. Lisbon, Portugal.

\bibitem[{Deb and Kumar(2007)}]{deb2007light}
Deb, K.; and Kumar, A. 2007.
\newblock Light beam search based multi-objective optimization using evolutionary algorithms.
\newblock In \emph{Proceeedings of the 2007 IEEE Congress on Evolutionary Computation (CEC'07)}, 2125--2132. Singapore.

\bibitem[{Deb et~al.(2002)Deb, Pratap, Agarwal, and Meyarivan}]{deb2002fast}
Deb, K.; Pratap, A.; Agarwal, S.; and Meyarivan, T. 2002.
\newblock A fast and elitist multiobjective genetic algorithm: NSGA-II.
\newblock \emph{IEEE Transactions on Evolutionary Computation}, 6(2): 182--197.

\bibitem[{Deb and Sundar(2006)}]{deb2006reference}
Deb, K.; and Sundar, J. 2006.
\newblock Reference point based multi-objective optimization using evolutionary algorithms.
\newblock In \emph{Proceedings of the 8th ACM Conference on Genetic and Evolutionary Computation (GECCO'06)}, 635--642. Seattle, WA.

\bibitem[{Dinot et~al.(2023)Dinot, Doerr, Hennebelle, and Will}]{dinot2023runtime}
Dinot, M.; Doerr, B.; Hennebelle, U.; and Will, S. 2023.
\newblock Runtime analyses of multi-objective evolutionary algorithms in the presence of noise.
\newblock In \emph{Proceedings of the 32nd International Joint Conference on Artificial Intelligence (IJCAI'23)}, 5549--5557. Macao, SAR, China.

\bibitem[{Doerr, Kodric, and Voigt(2013)}]{doerr2013lower}
Doerr, B.; Kodric, B.; and Voigt, M. 2013.
\newblock Lower bounds for the runtime of a global multi-objective evolutionary algorithm.
\newblock In \emph{Proceedings of the 2013 IEEE Congress on Evolutionary Computation (CEC'13)}, 432--439. Cancun, Mexico.

\bibitem[{Doerr and Qu(2022)}]{doerr2022first}
Doerr, B.; and Qu, Z. 2022.
\newblock A first runtime analysis of the NSGA-II on a multimodal problem.
\newblock In \emph{Proceedings of the 17th International Conference on Parallel Problem Solving from Nature (PPSN'22)}, 399--412. Dortmund, Germany.

\bibitem[{Doerr and Qu(2023{\natexlab{a}})}]{doerr2023understanding}
Doerr, B.; and Qu, Z. 2023{\natexlab{a}}.
\newblock From understanding the population dynamics of the NSGA-II to the first proven lower bounds.
\newblock In \emph{Proceedings of the 37th AAAI Conference on Artificial Intelligence (AAAI'23)}, 12408--12416. Washington, DC.

\bibitem[{Doerr and Qu(2023{\natexlab{b}})}]{doerr2023runtime}
Doerr, B.; and Qu, Z. 2023{\natexlab{b}}.
\newblock Runtime analysis for the NSGA-II: Provable speed-ups from crossover.
\newblock In \emph{Proceedings of the 37th AAAI Conference on Artificial Intelligence (AAAI'23)}, 12399--12407. Washington, DC.

\bibitem[{Doerr and Wietheger(2023)}]{doerr2022mathematical}
Doerr, B.; and Wietheger, S. 2023.
\newblock A mathematical runtime analysis of the {Non-dominated Sorting Genetic Algorithm III (NSGA-III)}.
\newblock In \emph{Proceedings of the 32nd International Joint Conference on Artificial Intelligence (IJCAI'23)}, 5657--5665. Macao, SAR, China.

\bibitem[{Doerr and Zheng(2021)}]{doerr2021theoretical}
Doerr, B.; and Zheng, W. 2021.
\newblock Theoretical analyses of multi-objective evolutionary algorithms on multi-modal objectives.
\newblock In \emph{Proceedings of the 35th AAAI Conference on Artificial Intelligence (AAAI'21)}, 12293--12301. Virtual.

\bibitem[{Droste, Jansen, and Wegener(2002)}]{droste2002analysis}
Droste, S.; Jansen, T.; and Wegener, I. 2002.
\newblock On the analysis of the (1+ 1) evolutionary algorithm.
\newblock \emph{Theoretical Computer Science}, 276(1-2): 51--81.

\bibitem[{Giel(2003)}]{giel2003expected}
Giel, O. 2003.
\newblock Expected runtimes of a simple multi-objective evolutionary algorithm.
\newblock In \emph{Proceedings of the 2003 IEEE Congress on Evolutionary Computation (CEC'03)}, 1918--1925. Canberra, Australia.

\bibitem[{Giel and Lehre(2010)}]{Giel10}
Giel, O.; and Lehre, P.~K. 2010.
\newblock On the effect of populations in evolutionary multi-objective optimisation.
\newblock \emph{Evolutionary Computation}, 18(3): 335--356.

\bibitem[{Gong et~al.(2011)Gong, Liu, Zhang, Jiao, and Zhang}]{gong2011interactive}
Gong, M.; Liu, F.; Zhang, W.; Jiao, L.; and Zhang, Q. 2011.
\newblock Interactive MOEA/D for multi-objective decision making.
\newblock In \emph{Proceedings of the 13th ACM Conference on Genetic and Evolutionary Computation (GECCO'11)}, 721--728. Dublin, Ireland.

\bibitem[{Horoba(2009)}]{Horoba09}
Horoba, C. 2009.
\newblock Analysis of a simple evolutionary algorithm for the multiobjective shortest path problem.
\newblock In \emph{Proceedings of the 10th International Workshop on Foundations of Genetic Algorithms (FOGA'09)}, 113--120. Orlando, FL.

\bibitem[{Huang and Zhou(2020)}]{huang2020runtime}
Huang, Z.; and Zhou, Y. 2020.
\newblock Runtime analysis of somatic contiguous hypermutation operators in {MOEA/D} framework.
\newblock In \emph{Proceedings of the 34th AAAI Conference on Artificial Intelligence (AAAI'20)}, 2359--2366. New York, NY.

\bibitem[{Huang et~al.(2019)Huang, Zhou, Chen, and He}]{huang2019running}
Huang, Z.; Zhou, Y.; Chen, Z.; and He, X. 2019.
\newblock Running time analysis of {MOEA/D} with crossover on discrete optimization problem.
\newblock In \emph{Proceedings of the 33rd AAAI Conference on Artificial Intelligence (AAAI'19)}, 2296--2303. Honolulu, HI.

\bibitem[{Huang et~al.(2021)Huang, Zhou, Luo, and Lin}]{huang2021runtime}
Huang, Z.; Zhou, Y.; Luo, C.; and Lin, Q. 2021.
\newblock A runtime analysis of typical decomposition approaches in {MOEA/D} framework for many-objective optimization problems.
\newblock In \emph{Proceedings of the 30th International Joint Conference on Artificial Intelligence (IJCAI'21)}, 1682--1688. Virtual.

\bibitem[{Huber, Geiger, and Sevaux(2015)}]{huber2015simulation}
Huber, S.; Geiger, M.~J.; and Sevaux, M. 2015.
\newblock Simulation of preference information in an interactive reference point-based method for the bi-objective inventory routing problem.
\newblock \emph{Journal of Multi-Criteria Decision Analysis}, 22(1-2): 17--35.

\bibitem[{Janssen et~al.(2008)Janssen, Van~Herwijnen, Stewart, and Aerts}]{janssen2008multiobjective}
Janssen, R.; Van~Herwijnen, M.; Stewart, T.~J.; and Aerts, J.~C. 2008.
\newblock Multiobjective decision support for land-use planning.
\newblock \emph{Environment and planning B: Planning and design}, 35(4): 740--756.

\bibitem[{K{\"o}ksalan and Karahan(2010)}]{koksalan2010interactive}
K{\"o}ksalan, M.; and Karahan, I. 2010.
\newblock An interactive territory defining evolutionary algorithm: iTDEA.
\newblock \emph{IEEE Transactions on Evolutionary Computation}, 14(5): 702--722.

\bibitem[{Laumanns, Thiele, and Zitzler(2004{\natexlab{a}})}]{laumanns-nc04-knapsack}
Laumanns, M.; Thiele, L.; and Zitzler, E. 2004{\natexlab{a}}.
\newblock Running time analysis of evolutionary algorithms on a simplified multiobjective knapsack problem.
\newblock \emph{Natural Computing}, 3: 37--51.

\bibitem[{Laumanns, Thiele, and Zitzler(2004{\natexlab{b}})}]{LaumannsTEC04}
Laumanns, M.; Thiele, L.; and Zitzler, E. 2004{\natexlab{b}}.
\newblock Running time analysis of multiobjective evolutionary algorithms on pseudo-{B}oolean functions.
\newblock \emph{IEEE Transactions on Evolutionary Computation}, 8(2): 170--182.

\bibitem[{Li(2019)}]{li2019progressive}
Li, K. 2019.
\newblock Progressive preference learning: Proof-of-principle results in MOEA/D.
\newblock In \emph{Proceedings of the 10th International Conference on Evolutionary Multi-Criterion Optimization (EMO'19)}, 631--643. East Lansing, MI.

\bibitem[{Li, Han, and Chu(2023)}]{li2023moeas}
Li, M.; Han, X.; and Chu, X. 2023.
\newblock MOEAs are stuck in a different area at a time.
\newblock In \emph{Proceedings of the 25th ACM Conference on Genetic and Evolutionary Computation (GECCO'23)}, 303--311. Lisbon, Portugal.

\bibitem[{Lu et~al.(2019)Lu, Whalen, Boddeti, Dhebar, Deb, Goodman, and Banzhaf}]{debneural}
Lu, Z.; Whalen, I.; Boddeti, V.; Dhebar, Y.; Deb, K.; Goodman, E.; and Banzhaf, W. 2019.
\newblock NSGA-Net: Neural architecture search using multi-objective genetic algorithm.
\newblock In \emph{Proceedings of the 21th ACM Conference on Genetic and Evolutionary Computation (GECCO'19)}, 419–427. Prague, Czech Republic.

\bibitem[{Narukawa, Tanigaki, and Ishibuchi(2014)}]{narukawa2014evolutionary}
Narukawa, K.; Tanigaki, Y.; and Ishibuchi, H. 2014.
\newblock Evolutionary many-objective optimization using preference on hyperplane.
\newblock In \emph{Proceedings of the 16th ACM Conference on Genetic and Evolutionary Computation (GECCO'14)}, 91--92. Vancouver, Canada.

\bibitem[{Neumann(2007)}]{Neumann07}
Neumann, F. 2007.
\newblock Expected runtimes of a simple evolutionary algorithm for the multi-objective minimum spanning tree problem.
\newblock \emph{European Journal of Operational Research}, 181(3): 1620--1629.

\bibitem[{Neumann and Nguyen(2014)}]{neumann2014impact}
Neumann, F.; and Nguyen, A.~Q. 2014.
\newblock On the impact of utility functions in interactive evolutionary multi-objective optimization.
\newblock In \emph{Proceedings of the 10th International Conference on Simulated Evolution and Learning (SEAL’14)}, 419--430. Dunedin, New Zealand.

\bibitem[{Neumann and Theile(2010)}]{Neumann10}
Neumann, F.; and Theile, M. 2010.
\newblock How crossover speeds up evolutionary algorithms for the multi-criteria all-pairs-shortest-path problem.
\newblock In \emph{Proceedings of the 11th International Conference on Parallel Problem Solving from Nature (PPSN'10)}, 667--676. Krakow, Poland.

\bibitem[{Qian, Yu, and Zhou(2013)}]{Qian13}
Qian, C.; Yu, Y.; and Zhou, Z.-H. 2013.
\newblock An analysis on recombination in multi-objective evolutionary optimization.
\newblock \emph{Artificial Intelligence}, 204: 99--119.

\bibitem[{Ramírez, Romero, and Ventura(2018)}]{RAMIREZ201892}
Ramírez, A.; Romero, J.~R.; and Ventura, S. 2018.
\newblock Interactive multi-objective evolutionary optimization of software architectures.
\newblock \emph{Information Sciences}, 463-464: 92--109.

\bibitem[{Said, Bechikh, and Gh{\'e}dira(2010)}]{said2010r}
Said, L.~B.; Bechikh, S.; and Gh{\'e}dira, K. 2010.
\newblock The r-dominance: {A} new dominance relation for interactive evolutionary multicriteria decision making.
\newblock \emph{IEEE Transactions on Evolutionary Computation}, 14(5): 801--818.

\bibitem[{Steuer(1986)}]{mop}
Steuer, R.~E. 1986.
\newblock \emph{Multiple Criteria Optimization: Theory, Computations, and Application}.
\newblock New York, NY: John Wiley \& Sons.

\bibitem[{Tanino(1993)}]{tanino1993interactive}
Tanino, T. 1993.
\newblock An interactive multicriteria decision making method by using a genetic algorithm.
\newblock In \emph{Proceedings of International Conference on System Science \& System Engineering}, 381--386.

\bibitem[{Thiele et~al.(2009)Thiele, Miettinen, Korhonen, and Molina}]{thiele2009preference}
Thiele, L.; Miettinen, K.; Korhonen, P.~J.; and Molina, J. 2009.
\newblock A preference-based evolutionary algorithm for multi-objective optimization.
\newblock \emph{Evolutionary Computation}, 17(3): 411--436.

\bibitem[{Zhang and Li(2007)}]{zhang}
Zhang, Q.; and Li, H. 2007.
\newblock MOEA/D: A multiobjective evolutionary algorithm based on decomposition.
\newblock \emph{IEEE Transactions on Evolutionary Computation}, 11(6): 712--731.

\bibitem[{Zheng and Doerr(2022)}]{zheng2022better}
Zheng, W.; and Doerr, B. 2022.
\newblock Better approximation guarantees for the NSGA-II by using the current crowding distance.
\newblock In \emph{Proceedings of the 24th ACM Conference on Genetic and Evolutionary Computation (GECCO’22)}, 642--651. Boston, MA.

\bibitem[{Zheng and Doerr(2023)}]{zheng2023runtime}
Zheng, W.; and Doerr, B. 2023.
\newblock Runtime analysis for the NSGA-II: Proving, quantifying, and explaining the inefficiency for many objectives.
\newblock arXiv:2211.13084.

\bibitem[{Zheng, Liu, and Doerr(2022)}]{zheng2021first}
Zheng, W.; Liu, Y.; and Doerr, B. 2022.
\newblock A first mathematical runtime analysis of the non-dominated sorting genetic algorithm {II (NSGA-II)}.
\newblock In \emph{Proceedings of the 36th Conference on Artificial Intelligence (AAAI'22)}, 10408--10416. Virtual.

\end{thebibliography}

\appendix

\end{document}